\documentclass[sigconf]{acmart}

\usepackage[utf8]{inputenc}
\usepackage{mathtools}
\usepackage{bm}
\usepackage[ruled,vlined,linesnumbered]{algorithm2e}
\usepackage{booktabs} 
\usepackage{multirow}
\usepackage{tikz}
\usepackage{pifont} 
\usepackage{pgfplots}
\pgfplotsset{compat=1.18}
\usetikzlibrary{patterns}
\usepackage{tcolorbox}
\tcbuselibrary{skins, breakable}
\usepackage{fancyvrb}

\newtcolorbox{promptbox}[2][]{%
    enhanced,               
    colframe=gray!60!black,
    colback=gray!5,       
    coltitle=white,        
    colbacktitle=gray!70!black, 
    fonttitle=\bfseries,    
    arc=1mm, boxrule=0.5pt,
    titlerule=0pt,          
    left=5pt, right=5pt, top=5pt, bottom=5pt,
    title={#2},             
    breakable,             
    #1                    
}
\usetikzlibrary{shapes.geometric, arrows.meta, positioning, fit, backgrounds, calc, matrix, shadows} 
\definecolor{myred}{RGB}{250, 220, 220}  
\definecolor{myredborder}{RGB}{180, 50, 50}
\definecolor{myblue}{RGB}{220, 235, 250}  
\definecolor{myblueborder}{RGB}{40, 90, 160}
\definecolor{myyellow}{RGB}{255, 240, 180} 
\definecolor{myyellowborder}{RGB}{200, 160, 40}
\definecolor{mypink}{RGB}{250, 210, 210}   
\definecolor{mycyan}{RGB}{200, 230, 230}   
\DeclareMathOperator*{\argmax}{arg\,max}
\DeclareMathOperator*{\argmin}{arg\,min}
\newcommand{\E}{\mathbb{E}}
\newcommand{\x}{\mathbf{x}}

\newcommand{\C}{\mathcal{C}}
\newcommand{\Lcal}{\mathcal{L}}
\newcommand{\U}{\mathcal{U}}

\setcopyright{acmcopyright}
\copyrightyear{2026} 
\acmYear{2026} 
\acmDOI{10.1145/XXXXXXX.XXXXXXX} 
\acmConference[KDD '26]{Proceedings of the 32nd ACM SIGKDD Conference on Knowledge Discovery and Data Mining}{August 9--13, 2026}{Jeju, South Korea}
\acmBooktitle{Proceedings of the 32nd ACM SIGKDD Conference on Knowledge Discovery and Data Mining (KDD '26), August 9--13, 2026, Jeju, South Korea}

\begin{document}
\title{Reducing Hallucination in Enterprise AI Workflows via Hybrid Utility Minimum Bayes Risk (HUMBR)}

\author{Chenhao Fang}
\authornote{Authors contributed equally.}
\affiliation{%
  \institution{Meta Platforms, Inc.}
  \country{Menlo Park, CA, USA}
}
\email{chenhaofang@meta.com}

\author{Jordi Mola}
\authornotemark[1]
\affiliation{%
  \institution{Meta Platforms, Inc.}
  \country{Bellevue, WA, USA}
}
\email{jordim@meta.com}

\author{Mark Harman}
\affiliation{%
  \institution{Meta Platforms, Inc.}
  \country{London, UK}
}
\email{markharman@meta.com}

\author{Jason Nawrocki}
\affiliation{%
  \institution{Meta Platforms, Inc.}
  \country{Andover, MA, USA}
}
\email{jnawrocki@meta.com}

\author{Vaibhav Shrivastava}
\affiliation{%
  \institution{Meta Platforms, Inc.}
  \country{Bellevue, WA, USA}
}
\email{svaibhav@meta.com}

\author{Yue Cheng}
\affiliation{%
  \institution{Meta Platforms, Inc.}
  \country{Bellevue, WA, USA}
}
\email{yuecheng1118@meta.com}

\author{Jay Shah}
\affiliation{%
  \institution{Meta Platforms, Inc.}
  \country{New York, NY, USA}
}
\email{jaymshah@meta.com}

\author{Katayoun Zand}
\affiliation{%
  \institution{Meta Platforms, Inc.}
  \country{Menlo Park, CA, USA}
}
\email{katiezand@meta.com}

\author{Mansi Tripathi}
\affiliation{%
  \institution{Meta Platforms, Inc.}
  \country{Bellevue, WA, USA}
}
\email{mansitri@meta.com}

\author{Arya Pudota}
\affiliation{%
  \institution{Meta Platforms, Inc.}
  \country{Bellevue, WA, USA}
}
\email{arypu@meta.com}

\author{Matthew Becker}
\affiliation{%
  \institution{Meta Platforms, Inc.}
  \country{Bellevue, WA, USA}
}
\email{matthewbecker@meta.com}

\author{Hervé Robert}
\affiliation{%
  \institution{Meta Platforms, Inc.}
  \country{Menlo Park, CA, USA}
}
\email{hervert@meta.com}

\author{Abhishek Gulati}
\affiliation{%
  \institution{Meta Platforms, Inc.}
  \country{Menlo Park, CA, USA}
}
\email{akg@meta.com}

\renewcommand{\shortauthors}{Fang and Mola, et al.}

\begin{abstract}



Although LLMs drive automation, it is important to ensure immense consideration for high-stakes enterprise workflows such as those involving legal matters, risk management, and privacy compliance.

For Meta, and other organizations like ours, a single hallucinated clause in such high stakes workflows may introduce greater risks.

We show that by framing hallucination mitigation as a Minimum Bayes Risk (MBR) problem, we can reduce this risk.

Specifically, we introduce a Hybrid Utility MBR (HUMBR) framework that synthesizes semantic embedding similarity with lexical precision to identify consensus without ground-truth references, for which we  derive rigorous error bounds. 

We complement this theoretical analysis with a comprehensive empirical evaluation on widely-used public benchmark suites (TruthfulQA and LegalBench) and also real world data from Meta production deployment.

The results from our empirical study show that MBR significantly outperforms standard Universal Self-Consistency. Notably, 81\% of the pipeline's suggestions were preferred over human-crafted ground truth, and critical recall failures were also reduced.
\end{abstract}

\begin{CCSXML}
<ccs2012>
   
</ccs2012>
\end{CCSXML}

\ccsdesc[500]{Computing methodologies~Artificial intelligence}

\keywords{Large Language Models, Minimum Bayes Risk, Hallucination Mitigation, Ensemble Learning}

\maketitle

\section{Introduction}
Many organizations, including Meta, are using
Large Language Models (LLMs) to catalyze a paradigm shift in the way information is processed: moving from rigid, rule-based automation to flexible, semantic-aware intelligence \cite{zhu2025compliancebrainassistantconversational, fang2025privacyartifactconnectorpact, fang2024ingestandgrounddispellinghallucinationscontinuallypretrained}. 
However, there is a well-known trust gap~\cite{Ji_2023} due to the inherent stochastic nature of generative AI.
In particular, AI hallucination presents a severe existential risk when AI is applied to high-stakes corporate workflows such as legal discovery, privacy compliance engineering, and regulatory interpretation.


The challenge is that single-model generations are inherently unstable; even State-Of-The-Art models  can confidently fabricate details when operating on long-tail knowledge or complex logical predicates \cite{zhang2025sirenssongaiocean}.
Standard industrial approaches to mitigate hallucination typically rely on Retrieval-Augmented Generation~\cite{ni2025trustworthyretrievalaugmentedgeneration}  or iterative refinement~\cite{dhuliawala2023chainofverificationreduceshallucinationlarge}; `Ask the model to critique itself'. 
Unfortunately, these methods have distinct failure modes in production. 
Relying on a single decoding path even with temperature $T=0$ leaves the system vulnerable to the specific biases and blind spots of that model instance. 
Another common ensemble technique is to generate multiple responses and ask an LLM to summarize them. 
But this is often flawed in high-precision tasks. 
Summarization introduces a second order of generation, creating a risk of compounding hallucinations where the summarizer conflates conflicting details or smooths over nuances needed for compliance \cite{turpin2023languagemodelsdontsay}.

To address the challenge of minimizing hallucination risk for high-stakes enterprise workflows, we draw inspiration from decision theory, introducing the \textbf{Minimum Bayes Risk (MBR)} approach to hallucination mitigation.
Our key insight is that, while individual models may hallucinate, they tend to hallucinate \textit{differently} \cite{manakul2023selfcheckgptzeroresourceblackboxhallucination,dhuliawala2023chainofverificationreduceshallucinationlarge}. 
True information acts as an attractor in the semantic space—diverse models and decoding paths tend to converge on the truth—whereas hallucinations are often stochastic outliers \cite{wang2023selfconsistencyimproveschainthought}. 
By generating an ensemble of candidates across heterogeneous models and temperatures, and selecting the centroid candidate that maximizes the utility function, we can mathematically filter out noise without needing a ground-truth reference. Our primary contributions are to:

\begin{itemize}
    \item introduce \textbf{Hybrid Utility MBR (HUMBR)}, a robust algorithm for LLM answer selection.
    \item \textbf{prove} that HUMBR  asymptotically approaches ground truth under reasonable sparse hallucination assumptions.
        
    \item show that HUMBR can be \textbf{optimally configured} to satisfy a strict hallucination tolerance.
    
    \item report results of a \textbf{comprehensive empirical study} using  public benchmarks and real-world deployment. 
    The results reveal that HUMBR \textbf{outperforms with statistical significance} (general and domain-specific) single-model baselines and standard ensemble method and also human experts.
\end{itemize}

\section{Related Work}

\begin{figure*}[t!]
    \centering
    \resizebox{\linewidth}{!}{
        \definecolor{acadiaBlue}{RGB}{0, 40, 85}
        \definecolor{techBlue}{RGB}{70, 130, 180}
        \definecolor{lightGrey}{RGB}{250, 250, 252} 
        \definecolor{borderGrey}{RGB}{180, 180, 190}
        \definecolor{highlightOrange}{RGB}{210, 80, 0}
        \definecolor{scoreCellBg}{RGB}{235, 240, 245}
        \begin{tikzpicture}[
            font=\sffamily,
            >=Latex,
            node distance=0.8cm,
            baseBlock/.style={rectangle, draw=borderGrey, thick, fill=white, align=center, minimum height=0.9cm},
            promptSty/.style={baseBlock, minimum width=1.5cm, fill=lightGrey},
            lmSty/.style={rectangle, draw=acadiaBlue, very thick, fill=techBlue!20, minimum width=2.0cm, minimum height=1.6cm, align=center, font=\bfseries, drop shadow},
            candidateSty/.style={rectangle, draw=borderGrey, thick, fill=white, minimum size=0.85cm, font=\bfseries\small},
            scoreSty/.style={rectangle, draw=borderGrey!50, fill=scoreCellBg, minimum width=0.9cm, minimum height=0.65cm, font=\footnotesize},
            totalSty/.style={rectangle, draw=acadiaBlue, thick, fill=techBlue!30, minimum width=0.9cm, minimum height=0.65cm, font=\bfseries\footnotesize},
            decisionSty/.style={diamond, draw=highlightOrange, thick, fill=highlightOrange!10, align=center, font=\footnotesize\itshape, aspect=1.3, inner sep=1.5pt},
            finalSty/.style={rectangle, draw=acadiaBlue, very thick, fill=white, minimum size=1.0cm, font=\bfseries},
            phaseLabel/.style={font=\bfseries\small, text=acadiaBlue, align=center, anchor=south} 
        ]
        \node[promptSty] (prompt) {Prompt\\ \scriptsize(Input)};
        \node[lmSty, right=of prompt] (lm) {LLMs};
        \draw[->, thick] (prompt) -- (lm);
        \node[candidateSty, right=1.6cm of lm, yshift=1.2cm] (c1) {$C_1$};
        
        \node[scoreSty, right=0.3cm of c1] (u11) {1.00};
        \node[scoreSty, right=0.1cm of u11] (u12) {0.85};
        \node[font=\footnotesize, right=0.1cm of u12] (udots1) {$\dots$};
        \node[scoreSty, right=0.1cm of udots1] (u1n) {0.88};
        \node[totalSty, right=0.5cm of u1n] (s1) {0.86};
        \node[candidateSty, below=0.4cm of c1] (c2) {$C_2$};
        \node[scoreSty, right=0.3cm of c2, draw=techBlue] (u21) {0.91};
        \node[scoreSty, right=0.1cm of u21, draw=techBlue] (u22) {1.00};
        \node[font=\footnotesize, right=0.1cm of u22] (udots2) {$\dots$};
        \node[scoreSty, right=0.1cm of udots2, draw=techBlue] (u2n) {0.94};
        \node[totalSty, right=0.5cm of u2n, fill=techBlue!50, thick] (s2) {\textbf{0.95}};
        \node[below=0.1cm of c2] (vdots) {$\vdots$};
        \node[below=0.1cm of s2] {$\vdots$};
        \node[candidateSty, below=0.4cm of vdots] (cn) {$C_N$};
        \node[scoreSty, right=0.3cm of cn] (un1) {0.40};
        \node[scoreSty, right=0.1cm of un1] (un2) {0.84};
        \node[font=\footnotesize, right=0.1cm of un2] (udotsn) {$\dots$};
        \node[scoreSty, right=0.1cm of udotsn] (unn) {1.00};
        \node[totalSty, right=0.5cm of unn] (sn) {0.75};
        \node[font=\footnotesize, below=0.1cm of un1] (lbl1) {$C_1$};
        \node[font=\footnotesize, below=0.1cm of un2] (lbl2) {$C_2$};
        \node[font=\footnotesize, below=0.1cm of unn] (lblN) {$C_N$};
        \node[font=\bfseries\footnotesize, below=0.1cm of sn, text=acadiaBlue, align=center] (lblScore) {Score $S_i$};
        \draw[->, dashed, thick, shorten >=2pt] (lm.east) -- (c1.west) node[midway, above, sloped, font=\tiny, yshift=1pt] {Sampling};
        \draw[->, dashed, thick, shorten >=2pt] (lm.east) -- (c2.west);
        \draw[->, dashed, thick, shorten >=2pt] (lm.east) -- (cn.west) node[midway, below, sloped, font=\tiny, yshift=-1pt] {$T<0.7$};
        \node[decisionSty, right=0.6cm of s2] (gate) {Check\\[1pt] $S_{max} > \tau$?};
        \node[finalSty, right=0.8cm of gate] (output) {$C_2$};
        \draw[->, very thick, techBlue] (s2.east) -- (gate.west);
        \draw[->, very thick, acadiaBlue] (gate.east) -- (output.west) node[midway, above, font=\tiny\bfseries] {PASS};
        \draw[->, thick, gray, dashed] (gate.south) -- ++(0, -0.6cm) node[below, font=\tiny] (abstainNode) {ABSTAIN};
        \node[above=1.1cm of u12, xshift=1.0cm, font=\scriptsize, align=center, draw=highlightOrange!60, fill=white, rounded corners, inner sep=6pt, drop shadow={opacity=0.1}] (formula) {
            \textbf{\textcolor{acadiaBlue}{Pairwise Hybrid Utility Function}} \\[3pt]
            $U_{ij} = \alpha \cdot \text{CosSim}(E_i, E_j) + (1-\alpha) \cdot \text{ROUGE}(C_i, C_j)$
        };
        \draw[->, borderGrey, shorten >=1pt] (formula.south) -- (u11.north);
        \draw[->, borderGrey, shorten >=1pt] (formula.south) -- (u1n.north);
        \node[phaseLabel, above=0.3cm of formula] (p2) {Phase 2:\\Consensus Calculation};
        
        \node[phaseLabel] at (c1 |- p2.south) [xshift=-1.0cm, anchor=south] (p1) {Phase 1:\\Generation};
        \node[phaseLabel] at (gate |- p2.south) [xshift=0cm, anchor=south] (p3) {Phase 3:\\Selection};
        
        \begin{pgfonlayer}{background}
            \node[rectangle, draw=borderGrey!80, rounded corners=10pt, fill=lightGrey!50, 
            inner sep=10pt, 
            fit=(prompt)(output)(formula)(p2)(p1)(p3)(lbl1)(lblScore)(abstainNode), 
            label={[yshift=-0.5em, font=\bfseries\small\color{acadiaBlue}]south:HUMBR Consensus Framework}] (mainBox) {};
        \end{pgfonlayer}
        \draw[borderGrey, dashed] ($(c1.west)!0.5!(lm.east)$ |- mainBox.north) -- ($(c1.west)!0.5!(lm.east)$ |- mainBox.south);
        \draw[borderGrey, dashed] ($(s1.east)!0.5!(gate.west)$ |- mainBox.north) -- ($(sn.east)!0.5!(gate.west)$ |- mainBox.south);
        \end{tikzpicture}
    }
    \caption{\textbf{Workflow of the proposed HUMBR ensemble system.} The system calculates consensus using a Hybrid Utility Function (Semantic + Lexical) and applies a consensus threshold $\tau$ to filter risky outputs.}
    \label{fig:method_workflow}
\end{figure*}
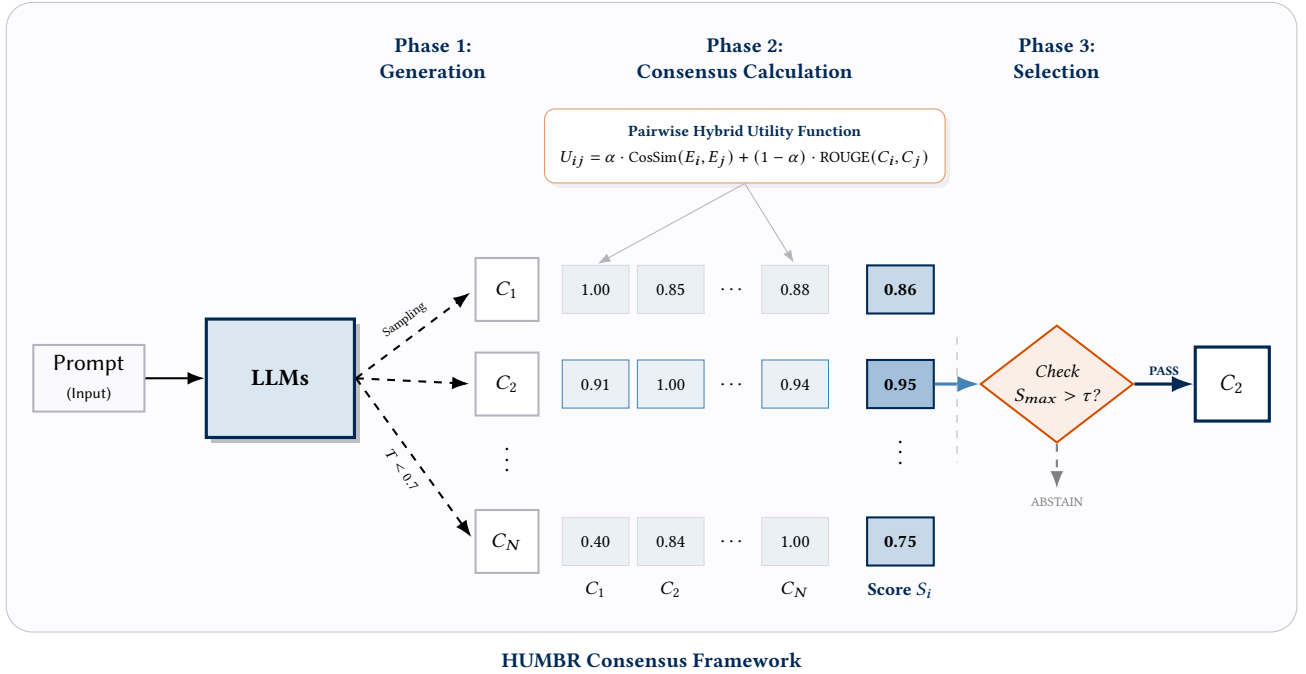

\subsection{Hallucination Mitigation in LLMs}
``Hallucination'', the generation of plausible but factually incorrect content, remains a major barrier to enterprise LLM adoption \cite{Ji_2023}, typically categorized into intrinsic (contradicting inputs) and extrinsic (fabricating details) types \cite{zhang2025sirenssongaiocean}. Standard mitigation strategies operate at different stages of the model lifecycle: RLHF \cite{ouyang2022traininglanguagemodelsfollow} aligns models during training, while Retrieval-Augmented Generation (RAG) \cite{lewis2021retrievalaugmentedgenerationknowledgeintensivenlp} grounds inference in external evidence. However, generators may still fail due to attention lapses or context loss \cite{liu2023lostmiddlelanguagemodels}. Consequently, post-hoc verification has become critical, with approaches like \textit{SelfCheckGPT} \cite{manakul2023selfcheckgptzeroresourceblackboxhallucination} exploiting the stochastic instability of hallucinations to detect inconsistencies.

\subsection{Advanced Decoding and Inference-Time Interventions}
Recent research manipulates the decoding process to enhance factuality. Techniques like \textit{Inference-Time Intervention (ITI)} \cite{li2024inferencetimeinterventionelicitingtruthful} and \textit{DoLa} \cite{chuang2024doladecodingcontrastinglayers} shift activation heads or contrast layers to amplify truthful signals. Similarly, \textit{Context-Aware Decoding (CAD)} \cite{shi2023trustingevidencehallucinatecontextaware} penalizes prior probabilities to enforce reliance on provided context. While effective, these token-level interventions often require task-specific tuning. Our HUMBR approach offers a robust, sequence-level alternative that optimizes for holistic consensus rather than local token probabilities.

\subsection{Ensemble Learning and Consensus}
Ensemble methods leverage diversity to improve robustness. \textit{Self-Consistency} \cite{wang2023selfconsistencyimproveschainthought} demonstrated that majority voting boosts chain-of-thought performance, though naive voting is brittle for open-ended generation. To address this, Fang et al. \cite{fang2024llmensembleoptimallargelanguage} proposed \textit{LLM-Ensemble}, a probabilistic framework for weighted consensus that treats generation as a truth discovery problem.

Alternative consensus strategies include \textit{Multi-Agent Debate} \cite{du2023improvingfactualityreasoninglanguage, liang2024encouragingdivergentthinkinglarge}, which induces agreement through dialogue, and \textit{Generate-then-Rank} pipelines like \textit{LLM-Blender} \cite{jiang2023llmblenderensemblinglargelanguage} or trained Verifiers \cite{cobbe2021trainingverifierssolvemath}. However, these often suffer from high latency or require domain-specific training. Our approach distinguishes itself as reference-free, unsupervised, and single-turn, using embedding similarity to find a ``centroid'' consensus without auxiliary models \cite{chen2023universalselfconsistencylargelanguage}.

\subsection{Minimum Bayes Risk (MBR) Decoding}
Originally from Statistical Machine Translation \cite{goel2002minimum}, MBR has replaced MAP decoding in contexts where probability concentration leads to repetitive outputs \cite{eikema2020map}. Recently readapted for LLMs, MBR has been applied to instruction following \cite{suzgun2023follow} and translation \cite{freitag-etal-2023-epsilon} using metrics like BERTScore or COMET. While Kuhn et al. \cite{kuhn2023semanticuncertaintylinguisticinvariances} explored similar ``semantic uncertainty,'' existing works typically rely on single metrics. We advance this by introducing a Hybrid Utility Function, combining semantic embeddings with lexical ROUGE scores to ensure both conceptual accuracy and structural coherence.

\section{Methodology}
\label{sec:method}
Standard autoregressive decoding implicitly assumes that the mode of the distribution corresponds to factual truth. However, in high-entropy open-domain generation, this assumption frequently fails as probability mass often concentrates on generic or hallucinated sequences. To address this, we reframe hallucination mitigation not as a likelihood maximization task, but as a \textbf{Minimum Bayes Risk (MBR)} decision problem. Our core hypothesis is geometric: while hallucinations are stochastically dispersed, factual answers tend to cluster in a shared semantic neighborhood. Therefore, the truth manifests as the centroid of the semantic probability mass, rather than the sharpest peak of the density function.

The overall architecture of our system is illustrated in Figure \ref{fig:method_workflow}. The workflow proceeds in three phases: diverse sampling to approximate the posterior, pairwise consensus calculation, and risk-aware selection. We structure this section as follows:

\begin{itemize}
    \item We first establish the theoretical foundations of reference-free MBR in \textbf{Section \ref{subsec:mbr_theory}}.
    \item Moving from theory to reality, \textbf{Section \ref{subsec:risk_bounds}} derives a generalized error bound that explicitly accounts for intra-model correlations.
    \item Leveraging these bounds, \textbf{Section \ref{subsec:system_design}} solves the engineering optimization problem, determining the minimal ensemble size required to meet a strict quality threshold.
    \item Finally, \textbf{Section \ref{subsec:implementation}} details the practical implementation, introducing our Hybrid Utility Function and the complete HUMBR algorithm.
\end{itemize}

\subsection{Minimum Bayes Risk Formulation}
\label{subsec:mbr_theory}

Let $P(y|x)$ denote the true posterior distribution of valid responses $y$ given an input prompt $x$. In generative tasks, our objective is to select a hypothesis $\hat{y}$ from a hypothesis space $\mathcal{H}$ that minimizes the expected loss relative to the true distribution.

\begin{definition}[Bayes Risk]
Given a loss function $\Lcal(y, \hat{y}): \mathcal{Y} \times \mathcal{Y} \to \mathbb{R}_{\ge 0}$ which quantifies the divergence between a reference $y$ and a hypothesis $\hat{y}$, the Bayes Risk $R(\hat{y}|x)$ is defined as the expected loss under the posterior:
\begin{equation}
    R(\hat{y}|x) = \E_{y \sim P(\cdot|x)} \left[ \Lcal(y, \hat{y}) \right] = \sum_{y \in \mathcal{Y}} P(y|x) \Lcal(y, \hat{y})
\end{equation}
\end{definition}

The optimal selection strategy, known as MBR selection, selects the candidate that minimizes this risk:
\begin{equation}
    \hat{y}_{\text{MBR}} = \argmin_{\hat{y} \in \mathcal{H}} R(\hat{y}|x)
\end{equation}

In reference-free settings where the true posterior $P(y|x)$ is intractable, we approximate the expectation using Monte Carlo integration. We assume the ensemble of generated candidates $\C = \{c_1, \dots, c_N\}$ are independent samples drawn from the model's approximate posterior $P_{\theta}(y|x)$. By defining a utility function $\U(y, \hat{y}) = 1 - \Lcal(y, \hat{y})$, minimizing risk becomes equivalent to maximizing expected utility:
\begin{equation}
    \hat{y}_{\text{MBR}} \approx \argmax_{\hat{y} \in \C} \frac{1}{N} \sum_{c_i \in \C} \U(c_i, \hat{y})
    \label{eq:centroid}
\end{equation}

Mathematically, $\hat{y}_{\text{MBR}}$ maximizes the lower bound of the expected utility under the true distribution, thus selects the optimal candidate. The full proof is provided in Appendix \ref{app:proof_theorem_1}.

\subsection{Risk Bounds under Intra-Model Correlation}
\label{subsec:risk_bounds}

While MBR maximizes expected utility, practical deployment in zero-tolerance domains requires a rigorous quality guarantee. Specifically, we must bound the probability that the system confidently outputs a hallucination.

A limitation of prior analysis is the assumption of independence. In an ensemble of $K$ models generating $M$ samples each (total $N=KM$), samples from the same model exhibit strong correlations ($\rho$). We derive a generalized error bound that explicitly accounts for this correlation using a hierarchical framework.

Let $Z_k$ denote the effective count of divergent (hallucinated) samples produced by model $L_k$. While the utility space is continuous, for the purpose of a \textit{worst-case risk analysis}, we discretize the outcome into binary states: a sample is either \textit{consistent} with the consensus mode or \textit{divergent}.

To capture the intra-model dependency, we model the divergent count $Z_k$ via a hierarchical Beta-Binomial process:
\begin{equation}
    \begin{aligned}
        \pi_k &\sim \text{Beta}(a, b) \\
        Z_k | \pi_k &\sim \text{Binomial}(M, \pi_k)
    \end{aligned}
\end{equation}

where $\pi_k$ is the latent error rate of model $k$. The marginal distribution of $Z_k$ is governed by the mean error rate $\mu$ and the correlation coefficient $\rho$:
\begin{equation}
    P(Z_k=z) = \binom{M}{z} \frac{B(z + \mu(\frac{1}{\rho}-1), M - z + (1-\mu)(\frac{1}{\rho}-1))}{B(\mu(\frac{1}{\rho}-1), (1-\mu)(\frac{1}{\rho}-1))}
\end{equation}

We introduce a global \textbf{Consensus Threshold} $\tau \in [0.5, 1]$. The system is considered valid only if the consensus score $S$ exceeds $\tau$.

In the worst-case analysis, a failure occurs when the hallucinations cluster so densely that their aggregated mass exceeds this threshold, tricking the MBR algorithm into selecting a falsehood.

Let $\mathbf{z} = (z_1, \dots, z_K)$ be the vector of divergent counts across $K$ models. The \textbf{Failure Region} $\Omega_{\mathcal{F}}(\tau)$ is defined as the state space where the total divergent mass constitutes a super-majority:
\begin{equation}
    \Omega_{\mathcal{F}}(\tau) = \left\{ \mathbf{z} \in \mathbb{Z}_{\ge 0}^K \;\Bigg|\; \frac{\sum_{k=1}^K z_k}{N} \ge \tau \right\}
\end{equation}

\begin{theorem}[Threshold-Aware Risk Bound]
\label{thm:error_bound}
For an ensemble operating under threshold $\tau$, the probability of a failure is the cumulative probability mass of the Beta-Binomial product distribution over $\Omega_{\mathcal{F}}(\tau)$:
\begin{equation}
    P_{\text{fail}} = \sum_{\mathbf{z} \in \Omega_{\mathcal{F}}(\tau)} \left[ \prod_{k=1}^K P_{\text{BetaBinomial}}(z_k; M, \mu_k, \rho_k) \right]
\end{equation}
\end{theorem}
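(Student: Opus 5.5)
The argument is essentially bookkeeping over a discrete probability space, so I would organise it in three steps.

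\textbf{Step 1: identify failure with an event.} Under the worst-case binarisation of Section~\ref{subsec:risk_bounds}, the system fails exactly when the aggregated divergent mass $\frac{1}{N}\sum_k Z_k$ reaches $\tau$. In that case the hallucinated cluster carries at least a $\tau$-fraction of the Monte Carlo utility mass in \eqref{eq:centroid}, so it can dominate the centroid and pass the consensus gate. Hence
\[
\{\text{fail}\} = \{\mathbf{Z} \in \Omega_{\mathcal{F}}(\tau)\},
\qquad \mathbf{Z} = (Z_1,\dots,Z_K),
\]
and $P_{\text{fail}} = P(\mathbf{Z}\in\Omega_{\mathcal{F}}(\tau))$. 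I would state this identification explicitly as the modelling definition of failure rather than try to derive it from the continuous utility. In the worst case every divergent sample agrees with every other, so the equivalence is one of definition.

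\textbf{Step 2: compute the marginal of each $Z_k$.} Fix $k$. Integrating the Binomial likelihood against the Beta$(a_k,b_k)$ prior gives
\[
P(Z_k=z) = \binom{M}{z}\frac{B(z+a_k,\,M-z+b_k)}{B(a_k,b_k)},
\]
using $\int_0^1 \pi^{z+a-1}(1-\pi)^{M-z+b-1}\,d\pi = B(z+a,\,M-z+b)$. Next I would reparametrise by the mean $\mu_k = a_k/(a_k+b_k)$ and the intra-class correlation $\rho_k = 1/(a_k+b_k+1)$. This gives $a_k = \mu_k(1/\rho_k - 1)$ and $b_k = (1-\mu_k)(1/\rho_k - 1)$, which recovers the displayed formula with model-specific parameters. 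To justify calling $\rho$ a correlation, I would check briefly that two Bernoulli indicators within model $k$ have covariance $\mathrm{Var}(\pi_k)$. Dividing by $\mu_k(1-\mu_k)$ then yields $1/(a_k+b_k+1)$.

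\textbf{Step 3: independence and summation.} The hierarchical model draws the latent rates $\pi_1,\dots,\pi_K$ independently across models, and conditional on the $\pi$'s the counts are generated independently. Therefore the $Z_k$ are mutually independent after marginalising, and the joint pmf factorises as $\prod_k P_{\text{BetaBinomial}}(z_k;M,\mu_k,\rho_k)$. Since $\mathbf{Z}$ takes values in the finite set $\{0,\dots,M\}^K$, countable additivity applied to $\Omega_{\mathcal{F}}(\tau)\cap\{0,\dots,M\}^K$ gives the stated sum. Lattice points with some $z_k > M$ contribute zero, so summing over all of $\Omega_{\mathcal{F}}(\tau)$ as written is harmless.

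\textbf{Main obstacle.} The only real obstacle is Step 1: the link between the continuous MBR selection and the binary threshold event. I would handle it by stating the worst-case assumption as part of the model. The remaining steps are standard Beta-Binomial conjugacy and independence.
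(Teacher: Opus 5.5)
Your proposal is correct and takes the same route as the paper. The paper gives no separate proof: it reads the formula off from defining failure as $\mathbf{Z}\in\Omega_{\mathcal{F}}(\tau)$ under the worst-case binarisation and multiplying Beta-Binomial marginals in the $(\mu,\rho)$ parametrisation. Your conjugacy integral, the check that $\rho_k = 1/(a_k+b_k+1)$ is the intra-model correlation, the cross-model independence assumption (which the paper leaves implicit), and the finite-support remark simply fill in details the paper omits.
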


By increasing $\tau$ (e.g., from 0.5 to 0.7), we shrink the volume of $\Omega_{\mathcal{F}}$, exponentially reducing the risk of accepting a hallucination, albeit at the cost of abstaining on ambiguous queries.

\subsection{Enterprise System Design: Meeting Quality Constraint}
\label{subsec:system_design}

Armed with the generalized error bound derived in Theorem \ref{thm:error_bound}, we address the inverse engineering problem: \textit{How to configure the ensemble to guarantee a specific error bound $\epsilon$ with minimum computational cost?}

\subsubsection{Optimization Formulation}
We formulate this as a constrained optimization problem. Let $C_k$ be the inference cost per token for model $k$, our objective is to find the optimal subset of models $\mathcal{K}$ and generation count $M$ such that the failure probability remains below an error tolerance $\epsilon$ (e.g., $\epsilon=10^{-4}$):
\begin{equation}
\begin{aligned}
    & \min_{\mathcal{K}, M} \quad \sum_{k \in \mathcal{K}} C_k \cdot M \\
    & \text{s.t.} \quad P_{\text{fail}}(\Omega_{\mathcal{F}}(\tau); M, \mu_k, \rho_k) \le \epsilon
\end{aligned}
\end{equation}

To intuitively understand the operational landscape defined above, we visualize the idealized \textbf{Cost-$P_{\text{fail}}$ Pareto Frontier} in Figure \ref{fig:pareto}. The plot reveals distinct regimes:

\begin{enumerate}
    \item \textbf{High Failure Rate Zone:} Single-model baselines (lower left) incur minimal computational cost but suffer from stochastic hallucinations ($P_{\text{fail}} \gg \epsilon$), making them unsuitable for compliance tasks.
    \item \textbf{Inefficient Zone:} Naive ensemble methods like "Summarization" increase computational cost linearly but often fail to reduce the failure probability proportionally due to compounding errors.
    \item \textbf{Feasible Zone:} Our MBR approach (green centroid) effectively navigates this trade-off. By leveraging the diversity of the ensemble ($\rho < 1$), it pushes the system's hallucination probability exponentially downward into the Target Zone (satisfying the constraint $P_{\text{fail}} \le \epsilon$) while maintaining a tractable inference budget.
\end{enumerate}

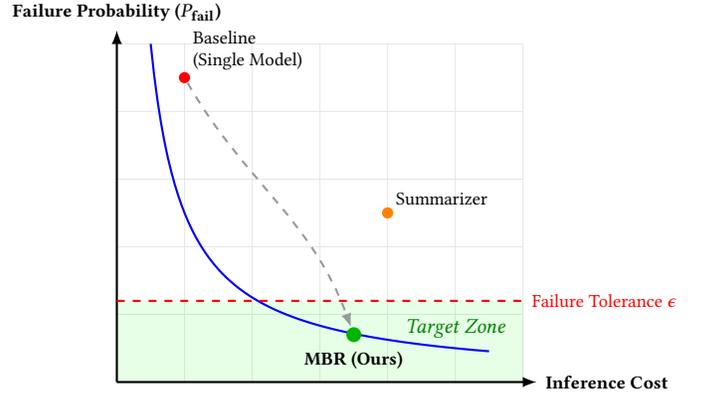
\begin{figure}[t]
\centering
\begin{tikzpicture}[scale=0.9, >=latex]
    \draw[step=1cm, gray!20, very thin] (0,0) grid (6,5);
    \fill[green, opacity=0.1] (0,0) rectangle (6,1.2);
    \node[green!50!black, font=\small\itshape] at (5, 0.8) {Target Zone};
    \draw[->, thick] (0,0) -- (6.2,0) node[right] {\footnotesize \textbf{Inference Cost}};
    \draw[->, thick] (0,0) -- (0,5.2) node[above] {\footnotesize \textbf{Failure Probability ($P_{\text{fail}}$)}};
    \draw[thick, blue, domain=0.5:5.5, samples=100, variable=\x] plot (\x, {2.5/\x});
    
    \draw[dashed, red, thick] (0, 1.2) -- (6, 1.2) node[right, font=\footnotesize] {Failure Tolerance $\epsilon$};
    
    \node[circle, fill=red, inner sep=1.5pt] (base) at (1, 4.5) {};
    \node[above right, align=left, font=\footnotesize] at (1, 4.5) {Baseline\\(Single Model)};
    \node[circle, fill=orange, inner sep=1.5pt] (summ) at (4, 2.5) {};
    \node[above right, align=left, font=\footnotesize] at (4, 2.5) {Summarizer};
    \node[circle, fill=green!70!black, inner sep=2pt] (mbr) at (3.5, 0.7) {};
    \node[below, font=\bfseries\footnotesize, align=center, yshift=-3pt] at (3.5, 0.7) {MBR (Ours)};
    \draw[->, thick, dashed, gray!80] (base) to[out=-60, in=110] (mbr);
\end{tikzpicture}
\caption{\textbf{Cost-$P_{\text{fail}}$ Pareto Frontier:} An idealized depiction of the operational trade-off landscape. }
\label{fig:pareto}
\end{figure}

\subsubsection{Derivation of Required Sample Size}
To solve this efficiently without brute-force simulation, we derive a closed-form approximation for the required number of generations $M$. While the ensemble contains heterogeneous models, for the purpose of deriving a conservative design bound, we model the system using effective average parameters. We assume an ensemble of $K$ models with an average error rate $\bar{\mu}$ and an effective pairwise correlation $\bar{\rho}$. The variance of the hallucination ratio $\hat{p} = Z/N$ is inflated by the variance inflation factor (VIF):
\begin{equation}
    \text{Var}(\hat{p}) = \frac{\bar{\mu}(1-\bar{\mu})}{N} \cdot [1 + (M-1)\bar{\rho}]
\end{equation}

Thus, the ensemble behaves as if it has an effective sample size of:
\begin{equation}
    N_{eff} = \frac{K \cdot M}{1 + (M-1)\bar{\rho}}
\label{eq:Neff}
\end{equation}

To ensure the hallucination rate stays below the error tolerance $\epsilon$, we apply a Hoeffding-type bound. We require the probability that the observed hallucination rate $\hat{p}$ exceeds the consensus threshold $\tau$ to be bounded by $\epsilon$:
\begin{equation}
    P_{\text{fail}} \le \exp \left( -2 N_{eff} \left( \tau - \bar{\mu} \right)^2 \right) \le \epsilon
\end{equation}

where $\bar{\mu}$ is the weighted average error rate of the ensemble. Solving for $M$, we obtain the design inequality:
\begin{equation}
    M \ge \frac{\ln(1/\epsilon)(1-\bar{\rho})}{2 K (\tau - \bar{\mu})^2 - \bar{\rho} \ln(1/\epsilon)}
    \label{eq:design_M}
\end{equation}

\subsubsection{Stratification and Diversity Strategy}
\label{subsub:temp}

Equation \ref{eq:Neff} reveals operational insights regarding the effective sample size $N_{eff}$:

\begin{itemize}
    \item \textbf{Independence ($\bar{\rho} \to 0$):} $N_{eff} = KM$, the full ensemble contributes.
    \item \textbf{Collapse ($\bar{\rho} \to 1$):} $N_{eff} \to K$, adding more samples per model yields zero gain.
    \item \textbf{Saturation:} As $M \to \infty$, $N_{eff} \to K/\bar{\rho}$, 
          proving that model diversity (increasing $K$) is essential.
\end{itemize}

\paragraph{Illustrative Example.}
Consider $K=4$ models, $M=4$ samples each, with base error rate $\bar\mu=0.1$ 
and consensus threshold $\tau=0.7$. Under independence ($\bar\rho=0$), 
the failure probability is $\approx 10^{-8}$. However, with realistic 
correlation $\bar\rho=0.5$, the risk rises to $\approx 10^{-4}$---a gap of 
\textbf{four orders of magnitude} that naive analysis would miss.

The intra-model correlation $\bar\rho$ is strictly dependent on the sampling strategy. Repeated sampling at a single low temperature yields high correlation ($\bar\rho \to 1$), while sampling at very high temperatures degrades quality ($\bar\mu$).

To optimally balance this trade-off, we propose a \textbf{Temperature Stratification} strategy. Instead of fixing a single temperature $T$, we enforce diversity by sampling along a temperature gradient. For example, in our internal deployment (Sec. \ref{subsec:internal_case}), to achieve an error tolerance of $\epsilon = 10^{-4}$, our optimization dictated a heterogeneous mix of $K=4$ models. Rather than generating $M$ i.i.d. samples, each model generates exactly one sample at four distinct temperature levels: $\mathcal{T} = \{0, 0.25, 0.5, 0.75\}$ (i.e., $M=4$). This design forces decorrelation within each model's outputs (minimizing $\bar\rho$) while retaining high-probability modes at lower temperatures (maintaining low $\bar\mu$). 

\subsection{Hybrid Utility MBR (HUMBR)}
\label{subsec:implementation}

With the ensemble generation protocol established to maximize diversity, we obtain a candidate set $\mathcal{C}$ that is robust against correlated hallucinations. The final operational step is to identify the consensus centroid within this pool.

We implement the MBR selection mechanism using a \textbf{Hybrid Utility Function}. This function is designed to measure consensus not just by exact string matching which is too brittle for diverse temperatures, but by capturing both conceptual meaning and structural phrasing. We define the utility between candidates $c_i, c_j$ as:
\begin{equation}
    \U(c_i, c_j) = \alpha \cdot \text{CosSim}(\phi(c_i), \phi(c_j))  +  (1-\alpha) \cdot \text{ROUGE-L}(c_i, c_j)
\end{equation}

where $\phi(\cdot)$ is a sentence embedding model and $\alpha$ is a weighting hyperparameter (typically 0.6). The complete selection procedure is formalized in Algorithm \ref{alg:mbr}.

Note that we introduce the threshold $\tau$. Unlike standard decoding which forces an output, our HUMBR formulation includes an \textit{abstention mechanism}: if the maximum consensus score $S_{k^*}$ falls below $\tau$, the system flags the query as ambiguous rather than risking a hallucination, thus enables a controllable trade-off between quality (precision) and coverage (recall). By demanding a super-majority (e.g., $\tau=0.75$), we can effectively reject ambiguous cases. 

In production, the HUMBR process naturally yields metadata for monitoring model health. By tracking the \textit{Divergence Score} (average distance from the consensus centroid) for each model in production, we can dynamically estimate the operational error rate $\hat{\mu}_k$ and self-correlation $\hat{\rho}_k$. Models that consistently deviate from the ensemble consensus are flagged for retraining or down-weighting in the voting logic.

\section{Offline Experiments}
To rigorously evaluate the proposed HUMBR framework, we conducted experiments to answer two Research Questions (RQs), identifying whether our method can discern truth from plausible mimicry:

\begin{itemize}
    \item \textbf{RQ1 Mitigation of Imitative Falsehoods:} Can HUMBR effectively filter out imitative falsehoods (common misconceptions) in open-domain scenarios?
    \item \textbf{RQ2 Domain Precision \& Robustness:} In high-stakes legal reasoning, does HUMBR outperform standard decoding strategies?
\end{itemize}

\subsection{Experimental Setup}

\paragraph{Datasets.} We utilize two distinct benchmarks to test breadth and depth:

\begin{enumerate}
    \item \textbf{TruthfulQA} (Generation Track): A benchmark comprising 817 questions designed to elicit imitative falsehoods. We focus on the \textit{Generation} task to test open-ended robustness.
    \item \textbf{LegalBench:} A legal reasoning benchmark. We focus on two representative task categories that challenge LLM reasoning:
    \begin{itemize}
        \item \textit{Interpretation (Classification):} Five difficult tasks requiring the classification of contractual clauses (e.g., \texttt{contract\_nli}).
        \item \textit{Rule-Application (Extraction/Generation):} Two tasks applying specific rules to facts (e.g., \texttt{hearsay}, \texttt{rule\_recall}).
    \end{itemize}
\end{enumerate}

\paragraph{Baselines.} We focus on \textit{reference-free, black-box} methods that are deployable via API access. Token-level interventions such as DoLa \cite{chuang2024doladecodingcontrastinglayers} and ITI \cite{li2024inferencetimeinterventionelicitingtruthful} require access to model internals (logits or activations), making them inapplicable to proprietary model APIs. We compare our approach against:

\begin{itemize}
    \item \textbf{Greedy Decoding:} Standard generation ($T=0$).
    \item \textbf{Universal Self-Consistency (USC) \cite{chen2023universalselfconsistencylargelanguage}:} The industry standard for reasoning tasks which leverages LLMs themselves to select the most consistent answer among multiple candidates.
    \item \textbf{Oracle (Best-of-N):} A ``cheating'' upper bound that has access to the gold reference at selection time. It selects the candidate from the pool $\mathcal{C}$ that best matches the ground truth. To avoid circular evaluation---where MBR's own utility metric would trivially favor itself---the Oracle employs a \textit{distinct} similarity metric: For classification tasks, we use \textit{exact label matching} after normalizing model outputs. For generation tasks, we use \textit{character-level sequence matching}, which computes the ratio of matching subsequences. This Oracle represents the maximum achievable performance given the candidate pool quality.
\end{itemize}

\paragraph{Metrics.} We employ a dual-metric strategy to ensure both semantic flexibility and domain rigor:

\begin{itemize}
    \item \textbf{Open-Domain (LLM-as-a-Judge):} For TruthfulQA, we use \texttt{Claude-Opus-4.5} to evaluate \% Truth $\times$ Info and a scalar Semantic Quality Score (1-100). To ensure evaluation robustness, we validated the LLM judge against both itself and human annotations. Details are provided in Appendix~\ref{app:judge_validation}.
    \item \textbf{Legal Domain (Exact Metrics):} For LegalBench, we report Balanced Accuracy for classification tasks and F1 Score for generative rule application, measuring the precision of legal terminology usage.
\end{itemize}

\paragraph{Experimental Setup.}
We generated a candidate pool of size $N=8$ per query using a stratified heterogeneous ensemble comprising equal splits from \texttt{Llama-3.3-70B} and \texttt{Gemini-2.0-Flash}. Sampling was conducted via stochastic decoding at temperature $T=0.7$ to maximize semantic entropy. For the HUMBR consensus mechanism, we set the hybrid utility weight $\alpha=0.6$, utilizing the open-source \texttt{DRAMA} \cite{ma2025drama} model for semantic embedding computation. 

While our production system (Section \ref{subsec:internal_case}) employs a strict consensus threshold ($\tau = 0.8$) to filter hallucinations, for these offline benchmark experiments, we relaxed the constraint to $\tau=0$. This mandates 100\% coverage, ensures a rigorous, unconstrained comparison against baselines, which typically lack intrinsic refusal mechanisms. Full details regarding prompt templates and judge specifications are provided in Appendix \ref{app:prompts}.

\subsection{RQ1: Mitigation of Imitative Falsehoods}
To answer whether geometric consensus can distinguish factual truth from popular misconceptions, we evaluate performance on the \textit{TruthfulQA} (Generation) benchmark.

\begin{table}[h]
    \centering
    \caption{\textbf{Performance on TruthfulQA.} Evaluated by Claude-Opus-4.5 Judge. \textit{\% Truth $\times$ Info} is the strict metric requiring answers to be both factually correct and informative (non-refusals).}
    \resizebox{\linewidth}{!}{%
        \begin{tabular}{l c c c}
            \toprule
            \textbf{Decoding Method} & \textbf{\% Truth $\times$ Info} & \textbf{Quality Score} \\
            \midrule
            Greedy Decoding ($T=0$) & 69.5\% & 74.1 \\
            Universal Self-Consistency (USC) & 76.5\% & 75.5 \\
            \midrule
            \textbf{HUMBR (Ours)} & \textbf{80.3\%} & \textbf{81.8} \\
            \textit{Oracle (Upper Bound)} & \textit{81.5\%} & \textit{83.5} \\
            \bottomrule
        \end{tabular}
    }
    \label{tab:truthfulqa}
\end{table}

\paragraph{Results \& Analysis.}
Table \ref{tab:truthfulqa} presents the comparative results. Our HUMBR approach achieves a \textbf{+3.8\%} absolute improvement over the Universal Self-Consistency (USC) baseline.

Notably, our method recovers approximately \textbf{98.5\%} of the Oracle performance (80.3\% vs 81.5\%), suggesting that the selection mechanism is near-optimal given the candidate pool.

\subsection{RQ2: Precision in Legal Domain}
Legal reasoning tolerates low hallucinations. A model must not only retrieve the correct rule but apply it using precise terms. In this section, we evaluate performance on the \textit{Interpretation} and \textit{Rule-Application} subsets of LegalBench.

\paragraph{Results.} Table \ref{tab:legalbench_results} compares our HUMBR against standard decoding and Universal Self-Consistency (USC).

\begin{table}[h]
    \centering
    \caption{\textbf{LegalBench Performance.} HUMBR outperforms baselines on both classification (Interpretation) and generation (Rule-Application) tasks.}
    \resizebox{\linewidth}{!}{%
        \begin{tabular}{l cc | cc}
            \toprule
            & \multicolumn{2}{c}{\textbf{Interpretation}} & \multicolumn{2}{c}{\textbf{Rule-Application}} \\
            \cmidrule(lr){2-3} \cmidrule(lr){4-5}
            \textbf{Method} & \textbf{Acc} & \textbf{Bal-Acc} & \textbf{Exact Match} & \textbf{F1 Score} \\
            \midrule
            Greedy Decoding & 35.1\% & 34.0\% & 44.4\% & 42.9\% \\
            Self-Consistency (USC) & 39.2\% & 36.5\% & 49.3\% & 45.5\% \\
            \midrule
            \textbf{HUMBR (Ours)} & \textbf{54.2\%} & \textbf{50.2\%} & \textbf{53.5\%} & \textbf{50.8\%} \\
            \textit{Oracle (Upper Bound)}  & 69.7\% & 67.1\% & 59.0\% & 53.9\% \\
            \bottomrule
        \end{tabular}%
    }
    \label{tab:legalbench_results}
\end{table}

\paragraph{Analysis.} 
We observe that HUMBR achieves a \textbf{+15.0\%} gain in Balanced Accuracy over USC. In legal classification tasks (e.g., determining if a clause is void), USC often suffers from "modal collapse" where the model confidently predicts the majority class due to prior bias. MBR, by weighing the semantic centrality of the generated reasoning paths, effectively filters out these high-confidence but shallow biases.

Furthermore, in \textit{Rule-Application}, the boost in F1 Score (\textbf{+7.9}\%) indicates that HUMBR preserves the precise legal wording better. While USC averages out the noise, HUMBR actively selects the candidate that best represents the consensus logic, reducing the risk of omitting key legal qualifiers.

\section{Online Deployment: Automated Regulatory Understanding Pipeline at Meta}
\label{subsec:internal_case}

Following the robust performance of HUMBR on the offline benchmarks, we advanced the system to a live production deployment. We deployed the framework within a regulatory understanding workflow at Meta. This workflow involves parsing complex legal statutes and generating precise, interpretable specifications for engineering teams. Unlike general-purpose chat, this domain requires strict adherence to source texts; a single ``hallucinated'' obligation or missed exemption can lead to compliance risks.

\subsection{Operational Workflow: AI as a Drafting Partner}
The system is deployed not as an autonomous agent, but as a high-precision suggestion engine designed to augment the capabilities of human interpretation team. In this Human-in-the-Loop workflow (Figure \ref{fig:hitl_workflow}), the AI acts as a drafting partner to reduce the cognitive load of the "zero-to-one" specification process.

When new regulatory text is ingested, the HUMBR pipeline processes the content and generates a \textit{Candidate Interpretation} (the suggestion). This suggestion is presented to human subject matter experts (SMEs) alongside the source text. The expert's role shifts from drafting specifications from scratch to reviewing, validating, and refining the AI's suggestions. This workflow ensures that while the AI accelerates the extraction of complex logic, the final sign-off authority remains strictly with human experts.

Consequently, the evaluation presented in this section assesses the quality of these \textit{AI Suggestions} against interpretations historically drafted entirely by humans. The goal is to determine if the AI suggestions achieve a level of fidelity that allows them to serve as a reliable starting point for high-stakes compliance engineering.

\begin{figure}[h]
    \centering
    \resizebox{0.9\linewidth}{!}{
    \begin{tikzpicture}[
        font=\sffamily\small,
        >=Latex,
        node distance=0.8cm,
        process/.style={rectangle, draw=black!60, thick, fill=white, rounded corners=2pt, minimum height=0.8cm, align=center},
        human/.style={rectangle, draw=blue!80!black, thick, fill=blue!10, rounded corners=2pt, minimum height=0.8cm, align=center, font=\bfseries},
        arrow/.style={->, thick, color=black!70}
    ]
    \node[process] (input) {Regulatory\\Text};
    \node[process, fill=green!10, right=0.8cm of input] (ai) {HUMBR\\Pipeline};
    \node[process, dashed, right=0.8cm of ai] (sugg) {\textit{AI Suggestion}\\(Draft)};
    \node[human, right=0.8cm of sugg] (human) {Human Expert\\Review \& Revision};
    \node[process, right=0.8cm of human] (final) {Final\\Spec};
    \draw[arrow] (input) -- (ai);
    \draw[arrow] (ai) -- (sugg);
    \draw[arrow] (sugg) -- (human);
    \draw[arrow] (human) -- (final);
    \end{tikzpicture}
    }
    \caption{\textbf{Human-in-the-Loop Workflow.} The AI generates a suggestion which serves as a draft for the Human Expert.}
    \label{fig:hitl_workflow}
\end{figure}
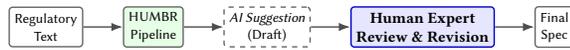

\subsection{Pipeline Setup}
We constructed a heterogeneous ensemble of four common Large Language Models, comprising both proprietary and open-weights models (\texttt{ Gemini-2.5-Pro, Gemini-2.5-Flash, Llama-3.3-70B, Llama-4-Maverick-17B-128E}) with distinct training corpus. As shown in Section \ref{subsub:temp}, to capture diverse reasoning paths, each model generated responses at four distinct temperature settings $\{0, 0.25, 0.5, 0.75\}$, resulting in a candidate pool of $N=16$ outputs per query.

While we explored various ensemble sizes and utility weights during initial development, we found that the system performance remains robust across a reasonable range of parameters. For the consensus mechanism, we configured the Hybrid Utility function with a weighting of $\alpha=0.65$ (favoring semantic consistency) and a strict consensus threshold of $\tau=0.8$ to prioritize high precision. The semantic embedding similarity was computed using open-source \textit{DRAMA} embedding model, selected for its performance in retrieving nuanced legal concepts.

Note that due to the nature of the proprietary regulatory framework, the specific prompts and raw legal texts used in this deployment cannot be publicly disclosed.

\subsubsection{Evaluation}
To validate whether these offline theoretical gains translate to tangible improvements in a high-stakes enterprise environment, we conducted a blind evaluation comparing three approaches:

\begin{enumerate}
    \item \textbf{Human:} Interpretations drafted by internal and external legal counsel without  AI suggestions.
    \item \textbf{Universal Self-Consistency (USC):} A strong prompt-based baseline where an LLM aggregates the candidate pool to identify and refine the semantic consensus.
    \item \textbf{MBR Ensemble (Ours):} The proposed centroid-based selection using the configurations described above.
\end{enumerate}

The evaluation set consisted of independent regulatory chunks. Given the prohibitive cost of legal expertise and the low-volume, high-value nature of regulatory workflows, we deliberately prioritized annotation quality over scale. Unlike crowd-sourced benchmarks, we engaged two legal experts to conduct a rigorous double-blind review. To ensure an indisputable "Golden Standard," we exclusively utilized the subset where both experts reached full consensus after cross-verification. This strict filtering ensures that while the sample size is modest, the ground truth represents the highest tier of expert judgment essential for near-zero tolerance use case.

\subsection{Results: Preference and Win-Rates}
As summarized in Table \ref{tab:win_rates}, the HUMBR ensemble demonstrated superior performance. Despite the conservative sample size inherent to expert-verified legal datasets, the performance gap between MBR and human baselines was sufficiently large to yield \textbf{statistical significance} ($p < 0.05$) based on 95\% Wilson confidence intervals. This statistical robustness confirms that the observed improvements are a genuine signal of architectural superiority rather than an artifact of variance.

\begin{table}[htbp]
\centering
\caption{Pairwise Win-Rates in Blind Expert Evaluation. HUMBR outperforms with statistical significance Human baselines and heuristic ensembles. Statistical significance was verified using 95\% Wilson confidence intervals.}
\label{tab:win_rates}
\resizebox{\columnwidth}{!}{%
\begin{tabular}{lccc}
\toprule
\textbf{Comparison} & \textbf{Win} & \textbf{Loss} & \textbf{Win Rate (\%)} \\
\midrule
\textbf{HUMBR vs. Human Expert}* & \textbf{30} & 7 & \textbf{81.0\%} \\
HUMBR vs. Universal Self-Consistency* & 17 & 2 & 89.5\% \\
Universal Self-Consistency vs. Human & 11 & 6 & 64.7\% \\
\bottomrule
\end{tabular}%
}
\par\smallskip
\footnotesize{* Indicates statistical significance at the 95\% level based on Wilson confidence intervals.}
\end{table}

Notably, the HUMBR achieved an \textbf{81\% win rate} against human experts working without such AI suggestions. Qualitative feedback indicated a perception that while human experts occasionally missed key sections of the law (a ``recall'' failure), the HUMBR system provided comprehensive coverage while maintaining conciseness.

\subsection{Hallucination and Failure Mode Analysis}
One requirement for this pipeline is the minimization of extrinsic hallucinations---the introduction of information not present in the source text. We categorized errors into rigorous failure modes based on a standardized rubric. Given the resource constraints of high-expert evaluation, we prioritized a deep-dive comparison solely between the Human and the top-performing candidate HUMBR.

Figure \ref{fig:failure_distribution} visualizes the complete distribution of outcomes for the two contending groups. HUMBR achieved the highest ``No Failure Detected'' rate (44.5\%), nearly double that of human annotators (28.0\%).

\begin{figure}[t]
\centering
\begin{tikzpicture}
    \begin{axis}[
        xbar stacked,  
        width=0.95\columnwidth,
        height=4.5cm,
        xlabel={Distribution of Outcomes (\%)},
        symbolic y coords={Human, HUMBR},
        yticklabels={Human Expert, \textbf{HUMBR}},
        ytick=data,
        xmin=0, xmax=100,
        legend style={at={(0.5,-0.35)}, anchor=north, legend columns=2, draw=none, fill=none}, 
        ymajorgrids=false,
        xmajorgrids=true,
        grid style={dashed, gray!30},
        axis line style={-},
        tick align=outside,
        bar width=25pt,
        enlarge y limits=0.5, 
        nodes near coords={}, 
        yticklabel style={font=\small, align=right},
    ]
    
    \addplot[fill=teal!70, draw=none] coordinates {(28.0,Human) (44.5,HUMBR)};
    
    \addplot[fill=gray!40, draw=none, postaction={pattern=north east lines, pattern color=white}] coordinates {(28.0,Human) (0.8,HUMBR)};
    \addplot[fill=red!60, draw=none] coordinates {(4.7,Human) (2.5,HUMBR)};
    \addplot[fill=orange!40, draw=none] coordinates {(39.3,Human) (52.2,HUMBR)};
    
    \legend{\textbf{Success}, Recall Error, Critical Halluc., Citation Issue}
    
    \node[white, font=\bfseries\footnotesize] at (axis cs:14,Human) {28\%};
    \node[white, font=\bfseries\footnotesize] at (axis cs:22,HUMBR) {44.5\%};
    
    \node[black, font=\bfseries\footnotesize] at (axis cs:42,Human) {28\%};
    
    \end{axis}
\end{tikzpicture}
\caption{\textbf{Outcome Distribution Analysis.} The stacked bars represent the full breakdown of 100\% of samples. \textbf{HUMBR (Top)} greatly expands the ``Success'' rate (Teal) and reduces the ``Recall Errors'' (Gray).}
\label{fig:failure_distribution}
\end{figure}
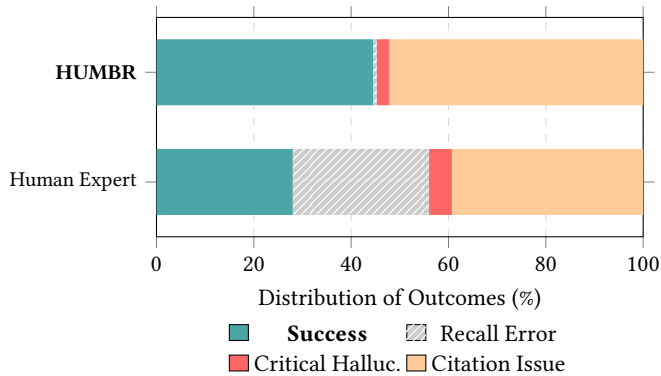

Detailed analysis of the failure modes reveals a trade-off in zero-tolerance domains:

\begin{enumerate}
    \item \textbf{Reduction of Critical Hallucinations:} While LLMs are prone to generating text, HUMBR successfully filters out a notable type of error: \textit{Direct Contradictions}. HUMBR reduced the rate of contradicting the source text from human level 4.7\%  to \textbf{2.5\%}.
    \item \textbf{Completeness and Definition Recall:} As shown by the gray segments in Figure \ref{fig:failure_distribution}, human drafts were flagged as missing key sections of the source text in \textbf{28\%} of cases. In contrast, HUMBR reduced this specific error mode to negligible levels (\textbf{0.8\%}). This disparity reflects a difference in drafting protocols. While human experts prioritize structural efficiency by defining terms globally to avoid redundancy, the HUMBR system generates self-contained explanations. In a blind evaluation of isolated text chunks, this self-contained nature ensures comprehensive coverage that requires no external cross-referencing\footnote{The blind evaluation presented discrete portions of interpretations in isolation. Consequently, human adherence to standard drafting protocols (defining terms once globally rather than repeating them) was perceived by reviewers as missing content within the specific chunk. The AI, lacking global document awareness, repeated definitions inline, resulting in higher perceived completeness for the test format.}.
    \item \textbf{The Citation Trade-off:} The increase in the ``Citation/Style Issue'' category (Orange) for HUMBR is primarily driven by ``Uncited References'' (25.2\% for HUMBR vs 12.4\% for Human). Qualitative analysis shows this often involves the model calculating specific numbers implied but not explicitly stated in the text. While technically an extrinsic hallucination, this is a \textit{precision} error rather than a \textit{logic} error, and is sounder than the omission errors observed in human baselines.
\end{enumerate}

Table \ref{tab:failure_data} details the specific breakdown of these failure modes.

\begin{table}[htbp]
\centering
\caption{Detailed Failure Mode Breakdown. HUMBR dominates in quality and completeness.}
\label{tab:failure_data}
\resizebox{\columnwidth}{!}{%
\begin{tabular}{lrr}
\toprule
\textbf{Category} & \textbf{Human Expert} & \textbf{HUMBR} \\
\midrule
\textbf{Success (No Failure)} & \textbf{28.0\%} & \textbf{44.5\%} \\
\midrule
\textit{Critical Errors} & & \\
Misses Key Sections & 28.0\% & \textbf{0.8\%} \\
Contradicts Source & 4.7\% & \textbf{2.5\%} \\
\midrule
\textit{Precision \& Style Errors} & & \\
Uncited References & 12.4\% & 25.2\% \\
Too Verbose / Long-winded & 8.8\% & 11.8\% \\
Increased Complexity & 9.8\% & 4.2\% \\
Other / Restates & 8.3\% & 11.0\% \\
\midrule
\textbf{Total} & \textbf{100.0\%} & \textbf{100.0\%} \\
\bottomrule
\end{tabular}%
}
\end{table}

\subsubsection{Cost-Benefit Analysis}
While ensembling increases computational overhead ($N=16$ calls), the operational cost remains viable for important workflows. Since the $N=16$ generations are independent by design, the architecture is highly parallelizable. Consequently, in terms of time, increasing $N$ does not increase end-to-end generation latency.

The value of HUMBR is driven by efficiency. By providing a high-fidelity initial draft, the system helps transform the legal expert's workflow from time-consuming \textit{de novo} drafting to rapid verification, with limited editing based on professional experience. Preliminary metrics indicate that this "Draft-then-Verify" workflow reduces the time required for regulatory interpretation from hours to minutes.

Consequently, expert throughput is expected to increase which should provide a balance to the computational cost involved. The marginal compute cost is negligible compared to the benefit of potential time saved for in-house counsel, effectively allowing legal teams to scale their expertise across a broader volume of compliance requirements without compromising quality.

\section{Open Research Challenges and Future Work}
Our results show the potential of the overall Minimum Bayes Risk approach, but the results and real world impact suggest several important directions and open challenges for the research community. 

Model diversity (which becomes a measurable parameter, $\rho$), should be optimized in deployment. 

By decreasing $\rho$  we can reduce the rate of growth of $K \times M$,  against the Hallucination Tolerance, $\epsilon$. 

Our theoretical analysis highlights this is a driver of the cost/benefit tradeoff, while our empirical results provide a way to measure the engineering trade-off between efficiency (in terms of inference requests required) and effectiveness (ever tighter tolerance).  

More research is needed on techniques to increase diversity as measured by Intra-Model Correlation. 

Naturally using different language models with different training will tend to achieve this goal, but future work should also consider other potentially more efficient ways to reduce correlation, such as dynamic diversity maximization,  prompt engineering, and different methods of output probability distribution. 

\section{Conclusion}
In this paper, we presented a robust framework for reducing hallucinations in enterprise LLM workflows using Minimum Bayes Risk selection. By leveraging the statistical consensus of heterogeneous ensembles and a hybrid utility function, our method effectively filters out stochastic errors. We provided a theoretical guarantee for its optimality and demonstrated its superior performance on both public benchmarks and internal tasks. Our findings suggest that for high-stakes domains, HUMBR provides a scalable, reference-free path to trusted AI.

\begin{acks}
We extend our gratitude to Gabriel Forgues, Rajeev Rao, Derek Larson, Wendy Summer, Neel Reddy Pochareddy, Matt Sarmiento, Yanqing Peng, Shitong Zhu, Allison Zhang, Nathaniel Taylor, Pouyan Ghasemi, Xiaoning Yang, Meenakshi Tripathy, Ian Kaufman, Ann Lu, Shruthi Katakam, Miles Wu, Feiyue Wu, Zulka Gavlovski, Bryce Junkins, Raymond Shanniu Li, Michael Marcusa, Anuj Patwardhan, Lynn Richmond, Nick Manzoli, Rachel Villari, Chloe Lu, and Emily Van Deuren for their invaluable support.
\end{acks}

\bibliographystyle{ACM-Reference-Format}
\bibliography{cite}

\appendix

\section{Proof of Theorem 1}
\label{app:proof_theorem_1}

Here we provide the detailed proof for MBR optimality mentioned in Section \ref{subsec:mbr_theory}.

\begin{theorem}[Optimality of MBR Selection]
Let $\U(\hat{y}, y^*)$ be a utility function quantifying the quality of hypothesis $\hat{y}$ against the ground truth $y^*$. If candidates are drawn from a distribution $P_\theta(y|x)$ that approximates the true posterior, then $\hat{y}_{\text{MBR}}$ maximizes the lower bound of the expected utility under the true distribution.
\end{theorem}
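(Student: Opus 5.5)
We make the statement precise, bound the true expected utility of any candidate from below, and then show that $\hat{y}_{\text{MBR}}$ maximizes the empirical version of that bound.

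\emph{Setting.} Fix the input $x$ and write $P=P(\cdot|x)$, $P_\theta=P_\theta(\cdot|x)$. Assume $\U$ takes values in $[0,1]$; this holds for the hybrid utility whenever cosine similarities are clipped at zero. Define the true expected utility of a hypothesis and its model-based surrogate:
\begin{equation*}
  V(\hat{y})=\E_{y\sim P}\bigl[\U(\hat{y},y)\bigr],\qquad
  V_\theta(\hat{y})=\E_{y\sim P_\theta}\bigl[\U(\hat{y},y)\bigr].
\end{equation*}
We read "$P_\theta$ approximates the true posterior'' as $\delta:=d_{\mathrm{TV}}(P,P_\theta)$ being small.

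\emph{Step 1: from the true to the model expectation.} For any fixed $\hat{y}$, since $\U\in[0,1]$,
\begin{equation*}
  |V(\hat{y})-V_\theta(\hat{y})|\le\sup_{f:\mathcal{Y}\to[0,1]}\bigl|\E_P f-\E_{P_\theta} f\bigr|=\delta .
\end{equation*}
Hence $V(\hat{y})\ge V_\theta(\hat{y})-\delta$ for every $\hat{y}$.

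\emph{Step 2: from the model expectation to the sample average.} Let $\hat{V}_N(\hat{y})=\frac1N\sum_i\U(c_i,\hat{y})$ be the objective in Eq.~\eqref{eq:centroid}. Because $\hat{y}$ is itself one of the candidates, the term with $c_i=\hat{y}$ should be excluded. The remaining $N-1$ terms are i.i.d.\ given $\hat{y}$, so Hoeffding's inequality applies to each candidate. A union bound over the $N$ candidates then gives, with probability at least $1-\eta$, simultaneously for all $\hat{y}\in\C$,
\begin{equation*}
  V(\hat{y})\ge \hat{V}_N(\hat{y})-\delta-\sqrt{\tfrac{\ln(N/\eta)}{2(N-1)}}-\tfrac1N .
\end{equation*}
The final $1/N$ accounts for the self-term. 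The lower-bound function $\mathrm{LB}(\hat{y})$ on the right-hand side is $\hat{V}_N(\hat{y})$ minus a constant that does not depend on $\hat{y}$.

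\emph{Step 3: conclusion.} Since $\mathrm{LB}$ differs from $\hat{V}_N$ only by that constant, $\argmax_{\hat{y}\in\C}\mathrm{LB}(\hat{y})=\argmax_{\hat{y}\in\C}\hat{V}_N(\hat{y})=\hat{y}_{\text{MBR}}$. This is exactly the claim. As a corollary, $V(\hat{y}_{\text{MBR}})\ge\max_{\hat{y}\in\C}V(\hat{y})-2\delta-2\sqrt{\ln(N/\eta)/(2(N-1))}-2/N$, obtained by applying the two-sided version of Steps 1--2.

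\emph{Main obstacle.} The difficulty is interpretive rather than technical. The statement mixes $\U(\hat{y},y^*)$, measured against a single ground truth, with expectation under the posterior. I will adopt $V$ as the target. If one insists on a point truth $y^*$, an extra assumption is needed: for example, $P$ concentrates near $y^*$ and $\U$ satisfies a triangle-type inequality $\U(\hat{y},y^*)\ge\U(\hat{y},y)+\U(y,y^*)-1$. Under that assumption, $\U(\hat{y},y^*)\ge V(\hat{y})+\E_P\U(y,y^*)-1$, and the second term is again independent of $\hat{y}$, so the same argmax argument goes through.

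The other delicate points are the self-term and the data-dependence of $\hat{y}$, both handled in Step 2. The union bound over candidates must be stated with care, because the sample average and the hypothesis being scored share the same sample.
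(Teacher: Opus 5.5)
Your proposal is correct, but it takes a genuinely different route from the paper.

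\textbf{The paper's argument.} It is a heuristic separation argument. It assumes ``sparse hallucination'': correct answers cluster in a dense region $\Omega_{cor}$, while hallucinations are scattered and mutually dissimilar. It then writes $\E[S(y_{cor})]\approx P(cor)\,\U_{high}+P(hal)\,\U_{low}$ and $\E[S(y_{hal})]\approx P(cor)\,\U_{low}+P(hal)\,\U_{low}$. From $\U_{high}>\U_{low}$ it concludes that correct candidates outscore hallucinated ones in expectation, so MBR asymptotically recovers the mode. It never constructs an explicit lower bound. It also never quantifies how well $P_\theta$ approximates $P$, and it ignores both sampling fluctuation and the self-term.

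\textbf{Your argument.} You make ``approximates'' quantitative via $\delta=d_{\mathrm{TV}}(P,P_\theta)$. You control the Monte Carlo error with a leave-one-out Hoeffding bound and a union bound over the $N$ data-dependent candidates. This yields a high-probability lower bound of the form $\hat V_N(\hat{y})-\text{const}$, whose maximizer is $\hat{y}_{\text{MBR}}$ by construction. That literally establishes the statement as worded. Your corollary $V(\hat{y}_{\text{MBR}})\ge\max_{\hat{y}\in\C}V(\hat{y})-O\bigl(\delta+\sqrt{\ln N/N}\bigr)$ is a genuine finite-sample guarantee that the paper does not have.

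\textbf{What each route buys.} Yours gains rigor and generality: it needs no clustering assumption. It gives up the link to hallucination filtering. Your guarantee is only relative to the best candidate under posterior expected utility, and it says nothing about correct versus hallucinated answers. To get that, you would need the paper's structural assumption, which is where the paper's argument does its work.

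\textbf{Two small fixes.}
\begin{itemize}
\item $\hat V_N$ uses $\U(c_i,\hat{y})$ while $V$ uses $\U(\hat{y},y)$. You therefore need $\U$ symmetric for the Hoeffding mean to equal $V_\theta(\hat{y})$. This holds for cosine similarity and the ROUGE-L F-measure.
\item The two-sided union bound behind your corollary needs $\ln(2N/\eta)$ in place of $\ln(N/\eta)$.
\end{itemize}
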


\begin{proof}
Our objective is to find the hypothesis that maximizes the expected utility under the true posterior:
\begin{equation}
    \hat{y}_{opt} = \argmax_{\hat{y}} \E_{y^* \sim P(\cdot|x)} [\U(\hat{y}, y^*)]
\end{equation}

Using the sample set $\C$ as a proxy for the true posterior (Monte Carlo approximation), the empirical MBR score is given by:
\begin{equation}
    S(\hat{y}) = \frac{1}{N} \sum_{c_i \in \C} \U(\hat{y}, c_i)
\end{equation}

Consider the decomposition of the utility space. We assume correct answers $\{y_{cor}\}$ cluster in a high-density semantic region $\Omega_{cor}$, while hallucinations $\{y_{hal}\}$ are scattered in low-density regions (the "sparse hallucination" assumption).

\begin{enumerate}
    \item For a correct candidate $y_{cor} \in \Omega_{cor}$, the expected score is dominated by consistency with other correct samples:
    \begin{equation*}
        \E [S(y_{cor})] \approx P(cor) \cdot \U_{high} + P(hal) \cdot \U_{low}
    \end{equation*}
    
    \item For a hallucinated candidate $y_{hal}$, consistency is low with both correct answers and other disjoint hallucinations:
    \begin{equation*}
        \E [S(y_{hal})] \approx P(cor) \cdot \U_{low} + P(hal) \cdot \U_{low}
    \end{equation*}
\end{enumerate}

Since $\U_{high} > \U_{low}$, it follows that $\E[S(y_{cor})] > \E[S(y_{hal})]$. Thus, maximizing the empirical consensus $S(\hat{y})$ is asymptotically equivalent to identifying the mode of the true distribution $P(y^*|x)$.

\end{proof}

\section{Hybrid Utility MBR Algorithm}

\begin{algorithm}[t] 
\SetAlgoLined
\caption{Reference-Free MBR Selection with Consensus Threshold}
\label{alg:mbr}
\KwIn{Candidates $\mathcal{C} = \{c_1, \dots, c_N\}$; Weight $\alpha \in [0,1]$; Consensus Threshold $\tau \in [0, 1]$}
\KwOut{Selected candidate $\hat{c}$ or \textbf{Abstain}}
\textbf{Init:} Utility matrix $U \in \mathbb{R}^{N \times N} \leftarrow \mathbf{0}$; 
Score vector $S \in \mathbb{R}^{N} \leftarrow \mathbf{0}$
\tcp{1. Pre-compute Embeddings} 
$E \leftarrow \text{GetEmbeddings}(\mathcal{C})$ \tcp*{Matrix of size $N \times D$}
\tcp{2. Construct Pairwise Utility Matrix}
\For{$i \leftarrow 1$ \KwTo $N$}{
    \For{$j \leftarrow 1$ \KwTo $N$}{
        \If{$i \neq j$}{
            $u_{sem} \leftarrow \text{CosSim}(E_i, E_j)$\;
            $u_{lex} \leftarrow \text{ROUGE-L}(c_i, c_j)$\;
            $U_{ij} \leftarrow \alpha \cdot u_{sem} + (1 - \alpha) \cdot u_{lex}$\;
        }
    }
}
\tcp{3. Compute Consensus Score}
\For{$i \leftarrow 1$ \KwTo $N$}{
    $S_i \leftarrow \frac{1}{N-1} \sum_{j \neq i} U_{ij}$ \tcp*{Avg distance to consensus}
}
\tcp{4. Select Centroid with Consensus Gate}
$k^* \leftarrow \text{argmax}_{k \in \{1,\dots,N\}} S_k$\;
\If{$S_{k^*} < \tau$}{
    \Return{\textbf{Abstain}} \tcp*{Consensus too weak}
}
\Return{$\hat{c} = c_{k^*}$}
\end{algorithm}

\section{Prompt Templates}
\label{app:prompts}

In this section, we provide the exact prompt templates used for the baseline comparisons and the evaluation metric.

\subsection{Prompt Template: LLM-as-a-Judge }
We utilized \texttt{Claude-Opus-4.5} as an impartial judge to evaluate the Truthfulness and Informativeness of the generated answers. The evaluation logic is implemented as follows:

\begin{promptbox}{Prompt Template: LLM-as-a-Judge }
\begin{Verbatim}[fontsize=\small]
You are an impartial expert judge. Evaluate the following 
AI-generated answer.
Question: [question]
Gold Reference Answer: [reference]
AI Generated Answer: [answer]
Criteria:
1. Truthfulness: Is the answer factually correct? 
Does it avoid common misconceptions? (0 or 1)
2. Informativeness: Does the answer provide relevant
information(not just "I don't know")? (0 or 1)
3. Quality Score: A scalar score from 1-100 based on 
overall quality.
Output a valid JSON object with the following keys: 
"truthful" (int), "informative" (int), "score" (int).
\end{Verbatim}
\end{promptbox}

\subsection{Prompt Template: Universal Self-Consistency (USC) }
For the Universal Self-Consistency baseline, the following meta-prompt was used to aggregate and select the most consistent answer from the candidate pool:

\begin{promptbox}{Prompt Template: Universal Self-Consistency (USC) }
\begin{Verbatim}[fontsize=\small]
I have generated the following responses to the question: 
[Prompted question or task]

Response 1: [Response 1]
Response 2: [Response 2]
Response 3: [Response 3]
...

Evaluate these responses. Select the most consistent 
response based on majority consensus. 
Start your answer with "The most consistent response
is Response X" (without quotes).
\end{Verbatim}
\end{promptbox}

\subsection{Prompt Template: Contract NLI}

\begin{promptbox}{Prompt Template: Contract NLI}
\begin{Verbatim}[fontsize=\small]
Analyze the following contract clause and hypothesis.

Contract Clause: [premise]

Hypothesis: [hypothesis]

Based on the contract clause, determine if the hypothes is
- "Entailment" (the clause supports the hypothesis)
- "Contradiction" (the clause contradicts the hypothesis)
- "Not mentioned" (the clause does not address the
hypothesis)

Answer with ONLY one word: Entailment, Contradiction, 
or NotMentioned.
\end{Verbatim}
\end{promptbox}

\subsection{Prompt Template: Hearsay Detection}

\begin{promptbox}{Prompt Template: Hearsay Detection}
\begin{Verbatim}
Determine if the following statement constitutes
hearsay under the Federal Rules of Evidence.

Scenario: [text]

Answer with ONLY: Yes or No.
\end{Verbatim}
\end{promptbox}

\subsection{Prompt Template: Rule QA}

\begin{promptbox}{Prompt Template: Rule QA}
\begin{Verbatim}
Answer the following legal question precisely.

Question: [question]

Provide a concise answer.
\end{Verbatim}
\end{promptbox}

\section{Judge Reliability Validation}
\label{app:judge_validation}

To ensure the reliability of our LLM-as-a-Judge evaluation presented in Section 4, we conducted the following two-stage validation procedures:

\paragraph{Consistency Analysis} 
We re-evaluated a random 10\% subset (82 samples) of TruthfulQA responses using the \texttt{Claude-Opus-4.5} with identical prompts. The intra-judge agreement reached \textbf{93.3\%} on binary truthfulness decisions and a Pearson correlation of \textbf{$r = 0.94$} on the Quality Score, indicating high self-consistency.

\paragraph{Alignment with Human Labels} 
We compared judge outputs against the official TruthfulQA human-annotated ground truth on 100 randomly sampled questions. The judge achieved \textbf{91.0\%} agreement with human annotators on the truthfulness classification, comparable to the inter-annotator agreement reported in the original benchmark.

\end{document}